\documentclass{article} 
\usepackage{iclr2019_conference,times}


\usepackage{hyperref}
\usepackage{url}

\usepackage{amsmath}
\usepackage{amssymb}
\usepackage{amsfonts}
\usepackage{amsthm}
\usepackage{wrapfig}
\usepackage{booktabs}
\usepackage{nicefrac}
\usepackage[dvips]{graphicx}
\newcommand\rurl[1]{\href{https://#1}{\nolinkurl{#1}}}

\title{$L_2$-Nonexpansive Neural Networks}


\author{Haifeng Qian \& Mark N. Wegman\\
IBM Research\\
Yorktown Heights, NY 10598, USA\\
\texttt{qianhaifeng,wegman@us.ibm.com}
}

%

\iclrfinalcopy 
\begin{document}

\maketitle

\begin{abstract}
This paper proposes a class of well-conditioned neural networks in which a unit amount of change in the inputs causes at most a unit amount of change in the outputs or any of the internal layers. We develop the known methodology of controlling Lipschitz constants to realize its full potential in maximizing robustness, with a new regularization scheme for linear layers, new ways to adapt nonlinearities and a new loss function. With MNIST and CIFAR-10 classifiers, we demonstrate a number of advantages. Without needing any adversarial training, the proposed classifiers exceed the state of the art in robustness against white-box $L_2$-bounded adversarial attacks. They generalize better than ordinary networks from noisy data with partially random labels. Their outputs are quantitatively meaningful and indicate levels of confidence and generalization, among other desirable properties.
\end{abstract}

\section{Introduction} \label{sec:intro}

Artificial neural networks are often ill-conditioned systems in that a small change in the inputs can cause significant changes in the outputs \citep{szegedy}.
This results in poor robustness and vulnerability under adversarial attacks which has been reported on a variety of networks including image classification \citep{carlini,goodfellow}, speech recognition \citep{kreuk2018fooling,alzantot2018did,carlini2018audio}, image captioning \citep{chen2017show} and natural language processing \citep{gao2018black,ebrahimi2017hotflip}.
These issues bring up both theoretical questions of how neural networks generalize \citep{kawaguchi2017generalization,xu2012robustness} and practical concerns of security in applications \citep{akhtar2018threat}.

A number of remedies have been proposed for these issues and will be discussed in Section~\ref{sec:liter}.
White-box defense is particularly difficult and many proposals have failed.
For example, \citet{athalye2018obfuscated} reported that out of eight recent defense works, only \citet{mit} survived strong attacks.
So far the mainstream and most successful remedy is that of adversarial training \citep{mit}.
However, as will be shown in Tables~\ref{tbl:mnist} and \ref{tbl:cifar}, the robustness by adversarial training diminishes when a white-box attacker \citep{carlini} is allowed to use more iterations.

This paper explores a different approach and demonstrates that a combination of the following three conditions results in enhanced robustness:
1) the Lipschitz constant of a network from inputs to logits is no greater than 1 with respect to the $L_2$-norm;
2) the loss function explicitly maximizes \emph{confidence gap}, which is the difference between the largest and second largest logits of a classifier;
3) the network architecture restricts confidence gaps as little as possible. We will elaborate.

There are previous works that achieve the first condition \citep{cisse,hein} or bound responses to input perturbations by other means \citep{kolter,raghu,haber}.
For example, Parseval networks \citep{cisse} bound the Lipschitz constant by requiring each linear or convolution layer be composed of orthonormal filters.
However, the reported robustness and guarantees are often under weak attacks or with low noise magnitude, and none of these works has demonstrated results that are comparable to adversarial training.

In contrast, we are able to build MNIST and CIFAR-10 classifiers, without needing any adversarial training, that exceed the state of the art \citep{mit} in robustness against white-box $L_2$-bounded adversarial attacks.
The defense is even stronger if adversarial training is added.
We will refer to these networks as \emph{$L_2$-nonexpansive neural networks} (L2NNNs).
Our advantage comes from a set of new techniques:
our weight regularization, which is key in enforcing the first condition, allows greater degrees of freedom in parameter training than the scheme in \citet{cisse}; 
a new loss function is specially designed for the second condition;
we adapt various layers in new ways for the third condition, for example norm-pooling and two-sided ReLU, which will be presented later.

Let us begin with intuitions behind the second and third conditions.
Consider a multi-class classifier.
Let $g\left({\bf x}\right)$ denote its confidence gap for an input data point ${\bf x}$.
If the classifier is a single L2NNN,\footnote{This is only an example and we recommend building a classifier as multiple L2NNNs, see Section~\ref{sec:loss}.} we have a guarantee\footnote{See Lemma~\ref{th:sqrt2} in Appendix~\ref{sec:proofs} for the proof of the guarantee.} that the classifier will not change its answer as long as the input ${\bf x}$ is modified by no more than an $L_2$-norm of $g\left({\bf x}\right)/\sqrt{2}$.
Therefore maximizing the average confidence gap directly boosts robustness and this motivates the second condition.
To explain the third condition, let us introduce the notion of preserving distance: the distance between any pair of input vectors with two different labels ought to be preserved as much as possible at the outputs, while we do not care about the distance between a pair with the same label.
Let $d\left({\bf x_1},{\bf x_2}\right)$ denote the $L_2$-distance between the output logit-vectors for two input points ${\bf x_1}$ and ${\bf x_2}$ that have different labels and that are classified correctly.
It is straightforward to verify the condition\footnote{See Lemma~\ref{th:gapsqrt2} in Appendix~\ref{sec:proofs} for the proof of the condition.} of $g\left({\bf x_1}\right)+g\left({\bf x_2}\right) \leq \sqrt{2} \cdot d\left({\bf x_1},{\bf x_2}\right)$.
Therefore a network that maximizes confidence gaps well must be one that preserves distance well.
Ultimately some distances are preserved while others are lost, and ideally the decision of which distance to lose is made by parameter training rather than by artifacts of network architecture.
Hence the third condition involves distance-preserving architecture choices that leave the decision to parameter training as much as possible, and this motivates many of our design decisions such as Sections~\ref{sec:relu} and \ref{sec:pool}.

In practice we employ the strategy of divide and conquer and build each layer as a nonexpansive map with respect to the $L_2$-norm.
It is straightforward to see that a feedforward network composed of nonexpansive layers must implement a nonexpansive map overall.
How to adapt subtleties like recursion and splitting-reconvergence is included in the appendix.

Besides being robust against adversarial noises, L2NNNs have other desirable properties.
They generalize better from noisy training labels than ordinary networks: for example, when 75\% of MNIST training labels are randomized, an L2NNN still achieves 93.1\% accuracy on the test set, in contrast to 75.2\% from the best ordinary network.
The problem of exploding gradients, which is common in training ordinary networks, is avoided because the gradient of any output with respect to any internal signal is bounded between -1 and 1.
Unlike ordinary networks, the confidence gap of an L2NNN classifier is a quantitatively meaningful indication of confidence on individual data points, and the average gap is an indication of generalization.

\section{$L_2$-nonexpansive neural networks} \label{sec:tech}

This section describes how to adapt some individual operators in neural networks for L2NNNs.
Discussions on splitting-reconvergence, recursion and normalization are in the appendix.

\subsection{Weights} \label{sec:w}

This section covers both the matrix-vector multiplication in a fully connected layer and the convolution calculation between input tensor and weight tensor in a convolution layer. The convolution calculation can be viewed as a set of vector-matrix multiplications: we make shifted copies of the input tensor and shuffle the copies into a set of small vectors such that each vector contains input entries in one tile; we reshape the weight tensor into a matrix by flattening all but the dimension of the output filters; then convolution is equivalent to multiplying each of the said small vectors with the flattened weight matrix. Therefore, in both cases, a basic operator is ${\bf y} = W {\bf x}$.
To be a nonexpansive map with respect to the $L_2$-norm, a necessary and sufficient condition is
\begin{equation} \label{eq:radius}
\begin{aligned}
{\bf y}^{\mathrm T}{\bf y} \leq {\bf x}^{\mathrm T}{\bf x} \quad \Longrightarrow \quad {\bf x}^{\mathrm T}W^{\mathrm T}W{\bf x} & \leq {\bf x}^{\mathrm T}{\bf x},\quad \forall {\bf x}\in \mathbb{R}^N \\
\rho\left( W^{\mathrm T}W \right) & \leq 1
\end{aligned}
\end{equation}
where $\rho$ denotes the spectral radius of a matrix.

The exact condition of (\ref{eq:radius}) is difficult to incorporate into training. Instead we use an upper bound:\footnote{The spectral radius of a matrix is no greater than its natural $L_{\infty}$-norm. $W^{\mathrm T}W$ and $WW^{\mathrm T}$ have the same non-zero eigenvalues and hence the same spectral radius.}
\begin{equation} \label{eq:bound}
\rho\left( W^{\mathrm T}W \right) \leq b\left(W\right) \triangleq \min \left( r(W^{\mathrm T}W), r(WW^{\mathrm T}) \right)
, \quad
\textrm{where } r \left( M \right) = \max_i\sum_j \left| M_{i,j} \right|
\end{equation}
The above is where our linear and convolution layers differ from those in \citet{cisse}: they require $WW^{\mathrm T}$ to be an identity matrix, and it is straightforward to see that their scheme is only one special case that makes $b\left(W\right)$ equal to 1.
Instead of forcing filters to be orthogonal to each other, our bound of $b\left(W\right)$ provides parameter training with greater degrees of freedom.

One simple way to use (\ref{eq:bound}) is replacing $W$ with $W^\prime = W / \sqrt{b\left(W\right)}$ in weight multiplications, and this would enforce that the layer is strictly nonexpansive.
Another method is described in the appendix.

As mentioned, convolution can be viewed as a first layer of making copies and a second layer of vector-matrix multiplications.
With the above regularization, the multiplication layer is nonexpansive.
Hence we only need to ensure that the copying layer is nonexpansive.
For filter size of $K_1$ by $K_2$ and strides of $S_1$ by $S_2$, we simply divide the input tensor by a factor of $\sqrt{ \lceil K_1 / S_1 \rceil \cdot \lceil K_2 / S_2 \rceil }$.

\subsection{ReLU and others} \label{sec:relu}

Let us turn our attention to the third condition from Section~\ref{sec:intro}.
ReLU, tanh and sigmoid are nonexpansive but do not preserve distance well.
This section presents a method that improves ReLU and is generalizable to other nonlinearities.
A different approach to improve sigmoid is in the appendix.

To understand the weakness of ReLU, let us consider two input data points A and B, and suppose that a ReLU in the network receives two different negative values for A and B and outputs zero for both.
Comparing the A-B distance before and after this ReLU layer, there is a distance loss and this particular ReLU contributes to it.
We use \emph{two-sided ReLU} which is a function from $\mathbb{R}$ to $\mathbb{R}^2$ and simply computes ReLU($x$) and ReLU($-x$).
Two-sided ReLU has been studied in \citet{shang2016understanding} in convolution layers for accuracy improvement.
It is straightforward to verify that two-sided ReLU is nonexpansive with respect to any $L_p$-norm and that it preserves distance in the above scenario.
We will empirically verify its effectiveness in increasing confidence gaps in Section~\ref{sec:results}.

Two-sided ReLU is a special case of the following general technique.
Let $f(x)$ be a nonexpansive and monotonically increasing scalar function, and note that ReLU, tanh and sigmoid all fit these conditions.
We can define a function from $\mathbb{R}$ to $\mathbb{R}^2$ that computes $f(x)$ and $f(x)-x$.
Such a new function is nonexpansive with respect to any $L_p$-norm\footnote{See Lemma~\ref{th:twoside} in Appendix~\ref{sec:proofs} for the proof of nonexpansiveness.} and preserves distance better than $f(x)$ alone.

\subsection{Pooling} \label{sec:pool}

The popular max-pooling is nonexpansive, but does not preserve distance as much as possible.
Consider a scenario where the inputs to pooling are activations that represent edge detection, and consider two images A and B such that A contains an edge that passes a particular pooling window while B does not.
Inside this window, A has positive values while B has all zeroes.
For this window, the A-B distance before pooling is the $L_2$-norm of A's values, yet if max-pooling is used, the A-B distance after pooling becomes the largest of A's values, which can be substantially smaller than the former.
Thus we suffer a loss of distance between A and B while passing this pooling layer.

We replace max-pooling with norm-pooling, which was reported in \citet{boureau2010theoretical} to occasionally increase accuracy.
Instead of taking the max of values inside a pooling window, we take the $L_2$-norm of them.
It is straightforward to verify that norm-pooling is nonexpansive\footnote{See Lemma~\ref{th:pool} in Appendix~\ref{sec:proofs} for the proof of nonexpansiveness.} and would entirely preserve the $L_2$-distance between A and B in the hypothetical scenario above.
Other $L_p$-norms can also be used.
We will verify its effectiveness in increasing confidence gaps in Section~\ref{sec:results}.

If pooling windows overlap, we divide the input tensor by $\sqrt{K}$ where $K$ is the maximum number of pooling windows in which an entry can appear, similar to convolution layers discussed earlier.

\subsection{Loss function} \label{sec:loss}

For a classifier with $K$ labels, we recommend building it as $K$ overlapping L2NNNs, each of which outputs a single logit for one label.
In an architecture with no split layers, this simply implies that these $K$ L2NNNs share all but the last linear layer and that the last linear layer is decomposed into $K$ single-output linear filters, one in each L2NNN.
For a multi-L2NNN classifier, we have a guarantee\footnote{See Lemma~\ref{th:multi2} in Appendix~\ref{sec:proofs} for the proof of the guarantee. The guarantee in either Lemma~\ref{th:sqrt2} or Lemma~\ref{th:multi2} is only a loose guarantee and it has been shown in \citet{hein} that a larger guarantee exists by analyzing local Lipschitz constants, though it is expensive to compute.} that the classifier will not change its answer as long as the input ${\bf x}$ is modified by no more than an $L_2$-norm of $g\left({\bf x}\right)/2$, where again $g\left({\bf x}\right)$ denotes the confidence gap.
As mentioned in Section~\ref{sec:intro}, a single-L2NNN classifier has a guarantee of $g\left({\bf x}\right)/\sqrt{2}$.
Although this seems better on the surface, it is more difficult to achieve large confidence gaps.
We will assume the multi-L2NNN approach.

We use a loss function with three terms, with trade-off hyperparameters $\gamma$ and $\omega$:
\begin{equation} \label{eq:lossall}
\mathcal{L} = \mathcal{L}_a + \gamma\cdot\mathcal{L}_b + \omega\cdot\mathcal{L}_c
\end{equation}

Let $y_1,y_2,\cdots,y_K$ be outputs from the L2NNNs. The first loss term is
\begin{equation} \label{eq:loss0}
\mathcal{L}_a = \textrm{softmax-cross-entropy}\left( u_1y_1, u_2y_2, \cdots, u_Ky_K, \textrm{label} \right)
\end{equation}
where $u_1,u_2,\cdots,u_K$ are trainable parameters. The second loss term is
\begin{equation} \label{eq:loss1}
\mathcal{L}_b = \textrm{softmax-cross-entropy}\left( vy_1, vy_2, \cdots, vy_K, \textrm{label} \right)
\end{equation}
where $v$ can be either a trainable parameter or a hyperparameter.
Note that $u_1,u_2,\cdots,u_K$ and $v$ are not part of the classifier and are not used during inference. The third loss term is
\begin{equation} \label{eq:loss2}
\mathcal{L}_c = \frac{\textrm{average}\left( \log\left( 1 - \textrm{softmax}\left(zy_1, zy_2, \cdots, zy_K\right)_\textrm{label}\right)  \right)}{z}
\end{equation}
where $z$ is a hyperparameter.

The rationale for the first loss term (\ref{eq:loss0}) is that it mimics cross-entropy loss of an ordinary network.
If an ordinary network has been converted to L2NNNs by multiplying each layer with a small constant, its original outputs can be recovered by scaling up L2NNN outputs with certain constants, which is enabled by the formula (\ref{eq:loss0}).
Hence this loss term is meant to guide the training process to discover any feature that an ordinary network can discover.
The rationale for the second loss term (\ref{eq:loss1}) is that it is directly related to the classification accuracy.
Multiplying L2NNN outputs uniformly with $v$ does not change the output label and only adapts to the value range of L2NNN outputs and drive towards better nominal accuracy.
The third loss term (\ref{eq:loss2}) approximates average confidence gap: the log term is a soft measure of a confidence gap (for a correct prediction), and is asymptotically linear for larger gap values.
The hyperparameter $z$ controls the degree of softness, and has relatively low impact on the magnitude of loss due to the division by $z$; if we increase $z$ then (\ref{eq:loss2}) asymptotically becomes the average of minus confidence gaps for correct predictions and zeroes for incorrect predictions.
Therefore loss (\ref{eq:loss2}) encourages large confidence gaps and yet is smooth and differentiable.

A notable variation of (\ref{eq:lossall}) is one that combines with adversarial training.
Our implementation applies the technique of \citet{mit} on the first loss term (\ref{eq:loss0}): we use distorted inputs in calculating $\mathcal{L}_a$.
The results are reported in Tables~\ref{tbl:mnist} and \ref{tbl:cifar} as Model 4.
Another possibility is to use distorted inputs in calculating $\mathcal{L}_a$ and $\mathcal{L}_b$, while $\mathcal{L}_c$ should be based on original inputs.

\section{Experiments} \label{sec:results}

Experiments are divided into three groups to study different properties of L2NNNs.
Our MNIST and CIFAR-10 classifiers are available at\\
\url{http://researcher.watson.ibm.com/group/9298}

\begin{figure}[th]
\centering
\includegraphics[width=0.1\columnwidth]{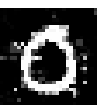}
\includegraphics[width=0.1\columnwidth]{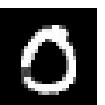}
\caption{Attacks on Model 2 found after 1K and 10K iterations: the same 0 recognized as 5.}
\label{fig:zero}
\end{figure}

\subsection{Robustness} \label{sec:defense}

This section evaluates robustness of L2NNN classifiers for MNIST and CIFAR-10 and compares against the state of the art \citet{mit}.
The robustness metric is accuracy under white-box non-targeted $L_2$-bounded attacks. The attack code of \citet{carlini} is used.
We downloaded the classifiers\footnote{At {\scriptsize\rurl{github.com/MadryLab/mnist_challenge}} and {\scriptsize\rurl{github.com/MadryLab/cifar10_challenge}}.
These models (Model 2's in Tables~\ref{tbl:mnist} and \ref{tbl:cifar}) were built by adversarial training with $L_{\infty}$-bounded adversaries \citep{mit}.
To the best of our knowledge, \citet{mittradeoff} from the same lab is the only paper in the literature that reports on models trained with $L_2$-bounded adversaries, and it reports that training with $L_2$-bounded adversaries resulted in weaker $L_2$ robustness than the $L_2$ robustness results from training with $L_{\infty}$-bounded adversaries in \citet{mit}.
Therefore we choose to compare against the best available models, even though they were trained with $L_{\infty}$-bounded adversaries.
Note also that our own Model 4's in Tables~\ref{tbl:mnist} and \ref{tbl:cifar} are trained with the same $L_{\infty}$-bounded adversaries.} of \citet{mit} and report their robustness against $L_2$-bounded attacks in Tables~\ref{tbl:mnist} and \ref{tbl:cifar}.\footnote{In reading Tables~\ref{tbl:mnist} and \ref{tbl:cifar}, it is worth remembering that the norm of after-attack accuracy is zero, and for example the 7.6\% on MNIST is currently the state of the art.}
Note that their defense diminishes as the attacks are allowed more iterations.
Figure~\ref{fig:zero} illustrates one example of this effect: the first image is an attack on MNIST Model 2 (0 recognized as 5) found after 1K iterations, with noise $L_2$-norm of 4.4, while the second picture is one found after 10K iterations, the same 0 recognized as 5, with noise $L_2$-norm of 2.1.
We hypothesize that adversarial training alone provides little absolute defense at the noise levels used in the two tables: adversarial examples still exist and are only more difficult to find.
The fact that in Table~\ref{tbl:cifar} Model 2 accuracy is lower in the 1000x10 row than the 10K row further supports our hypothesis.

\begin{table}[th]
\caption{Accuracies of MNIST classifiers under white-box non-targeted attacks with noise $L_2$-norm limit of 3.
MaxIter is the max number of iterations the attacker uses.
Model 1 is an ordinarily trained model.
Model 2 is the model from \citet{mit}.
Model 3 is L2NNN without adversarial training.
Model 4 is L2NNN with adversarial training.}
\label{tbl:mnist}
\centering
\begin{tabular}{lcccc}
\toprule
MaxIter & Model1 & Model2 & Model3 & Model4 \\
\midrule
Natural & 99.1\% & 98.5\% & 98.7\% & 98.2\% \\
100     & 70.2\% & 91.7\% & 77.6\% & 75.6\% \\
1000    & 0.05\% & 51.5\% & 20.3\% & 24.4\% \\
10K     &    0\% & 16.0\% & 20.1\% & 24.4\% \\
100K    &    0\% &  9.8\% & 20.1\% & 24.4\% \\
1M      &    0\% &  7.6\% & 20.1\% & 24.4\% \\
\bottomrule
\end{tabular}
\end{table}

\begin{table}[th]
\caption{Accuracies of CIFAR-10 classifiers under white-box non-targeted attacks with noise $L_2$-norm limit of 1.5.
MaxIter is the max number of iterations the attacker uses, and 1000x10 indicates 10 runs each with 1000 iterations.
Model 1 is an ordinarily network.
Model 2 is the model from \citet{mit}.
Model 3 is L2NNN without adversarial training.
Model 4 is L2NNN with adversarial training.}
\label{tbl:cifar}
\centering
\begin{tabular}{lcccc}
\toprule
MaxIter & Model1 & Model2 & Model3 & Model4 \\
\midrule
Natural & 95.0\% & 87.1\% & 79.2\% & 77.2\% \\
100     &    0\% & 13.9\% & 10.2\% & 20.8\% \\
1000    &    0\% &  9.4\% & 10.1\% & 20.4\% \\
10K     &    0\% &  9.0\% & 10.1\% & 20.4\% \\
1000x10 &    0\% &  8.7\% & 10.1\% & 20.4\% \\
100K    &    0\% &  NA    & 10.1\% & 20.4\% \\
\bottomrule
\end{tabular}
\end{table}

In contrast, the defense of the L2NNN models remain constant when the attacks are allowed more iterations, specifically MNIST Models beyond 10K iterations and CIFAR-10 Models beyond 1000 iterations.
The reason is that L2NNN classifiers achieve their defense by creating a confidence gap between the largest logit and the rest, and that half of this gap is a lower bound of $L_2$-norm of distortion to the input data in order to change the classification.
Hence L2NNN's defense comes from a minimum-distortion guarantee.
Although adversarial training alone may also increase the minimum distortion limit for misclassification, as suggested in \citet{mindistort} for a small network, that limit likely does not reach the levels used in Tables~\ref{tbl:mnist} and \ref{tbl:cifar} and hence the defense depends on how likely the attacker can reach a lower-distortion misclassification.
Consequently when the attacks are allowed to make more attempts the defense with guarantee stands while the other diminishes.

For both MNIST and CIFAR-10, adding adversarial training boosts the robustness of Model 4.
We hypothesize that adversarial training lowers local Lipschitz constants in certain parts of the input space, specifically around the training images, and therefore makes local robustness guarantees larger \citep{hein}.
To test this hypothesis on MNIST Models 3 and 4, we measure the average $L_2$-norm of their Jacobian matrices, averaged over the first 1000 images in the test set, and the results are 1.05 for Model 3 and 0.83 for Model 4.
Note that the $L_2$-norm of Jacobian can be greater than 1 for multi-L2NNN classifiers.
These measurements are consistent with, albeit does not prove, the hypothesis.

\begin{table}[th]
\caption{Ablation studies: MNIST model without weight regularization;
one without $\mathcal{L}_c$ loss;
one with max-pooling instead of norm-pooling;
one without two-sided ReLU;
Gap is average confidence gap.
R-Accu is under attacks with 1000 iterations and with noise $L_2$-norm limit of 3.}
\label{tbl:les}
\centering
\begin{tabular}{lccc}
\toprule
        & Accu. & Gap & R-Accu. \\
\midrule
no weight reg.          & 99.4\% & 68.3 & 0\%    \\
no $\mathcal{L}_c$ loss & 99.2\% & 2.2  & 8.9\%  \\
no norm-pooling         & 98.8\% & 1.3  & 9.9\%  \\
no two-sided ReLU       & 98.0\% & 2.5  & 15.1\% \\
\bottomrule
\end{tabular}
\end{table}

To test the effects of various components of our method, we build models for each of which we disable a different technique during training.
The results are reported in Table~\ref{tbl:les}.
To put the confidence gap values in context, our MNIST Model 3 has an average gap of 2.8.
The first one is without weight regularization of Section~\ref{sec:w} and it becomes an ordinary network which has little defense against adversarial attacks; its large average confidence gap is meaningless.
For the second one we remove the third loss term (\ref{eq:loss2}) and for the third one we replace norm-pooling with regular max-pooling, both resulting in smaller average confidence gap and less defense against attacks.
For the fourth one, we replace two-sided ReLU with regular ReLU, and this leads to degradation in nominal accuracy, average confidence gap and robustness.
Parseval networks \citep{cisse} can be viewed as models without $\mathcal{L}_c$ term, norm-pooling or two-sided ReLU, and with a more restrictive scheme for weight matrix regularization.

Model 3 in Table~\ref{tbl:mnist} and the second row of Table~\ref{tbl:les} are two points along a trade-off curve that are controllable by varying hyperparameter $\omega$ in loss function (\ref{eq:lossall}).
Other trade-off points have nominal accuracy and under-attack accuracy of (98.8\%,19.1\%), (98.4\%,22.6\%) and (97.9\%,24.7\%) respectively.
Similar trade-offs have been reported by other robustness works including adversarial training \citep{mittradeoff} and adversarial polytope \citep{wong2018scaling}.
It remains an open question whether such trade-off is a necessary part of life, and please see Section~\ref{sec:random} for further discussion on the L2NNN trade-off.

\begin{table}[th]
\caption{Accuracy of L2NNN classifiers under white-box non-targeted attacks with 1000 iterations and with noise $L_{\infty}$-norm limit of $\epsilon$.}
\label{tbl:li}
\centering
\begin{tabular}{lccc}
\toprule
     & $\epsilon$ & Model3 & Model4 \\
\midrule
MNIST    & 0.1    & 90.9\% & 92.4\% \\
MNIST    & 0.3    & 7.0\%  & 44.0\% \\
CIFAR-10 & $\nicefrac{8}{256}$  & 32.3\% & 42.5\% \\
\bottomrule
\end{tabular}
\end{table}

Although we primarily focus on defending against $L_2$-bounded adversarial attacks in this work,
we achieve some level of robustness against $L_{\infty}$-bounded attacks as a by-product.
Table~\ref{tbl:li} shows our results, again measured with the attack code of \citet{carlini}.
The $\epsilon$ values match those used in \citet{raghu,kolter,mit}.
Our MNIST $L_{\infty}$ results are on par with \citet{raghu,kolter} but not as good as \citet{mit}.
Our CIFAR-10 Model 4 is on par with \citet{mit} for $L_{\infty}$ defense.

\subsection{Meaningful outputs} \label{sec:interp}

This section discusses how to understand and utilize L2NNNs' output values. We observe strong correlation between the confidence gap of L2NNN and the magnitude of distortion needed to force it to misclassify, and images are included in appendix.

In the next experiment, we sort test data by the confidence gap of a classifier on each image.
Then we divide the sorted data into 10 bins and report accuracy separately on each bin in Figure~\ref{fig:bars}.
We repeat this experiment for Model 2 \citep{mit} and our Model 3 of Tables~\ref{tbl:mnist} and \ref{tbl:cifar}.
Note that the L2NNN model shows better correlation between confidence and robustness:
for MNIST our first bin is 95\% robust and second bin is 67\% robust.
This indicates that the L2NNN outputs are much more quantitatively meaningful than those of ordinary neural networks.

\begin{figure}[th]
\centering
\includegraphics[width=\columnwidth]{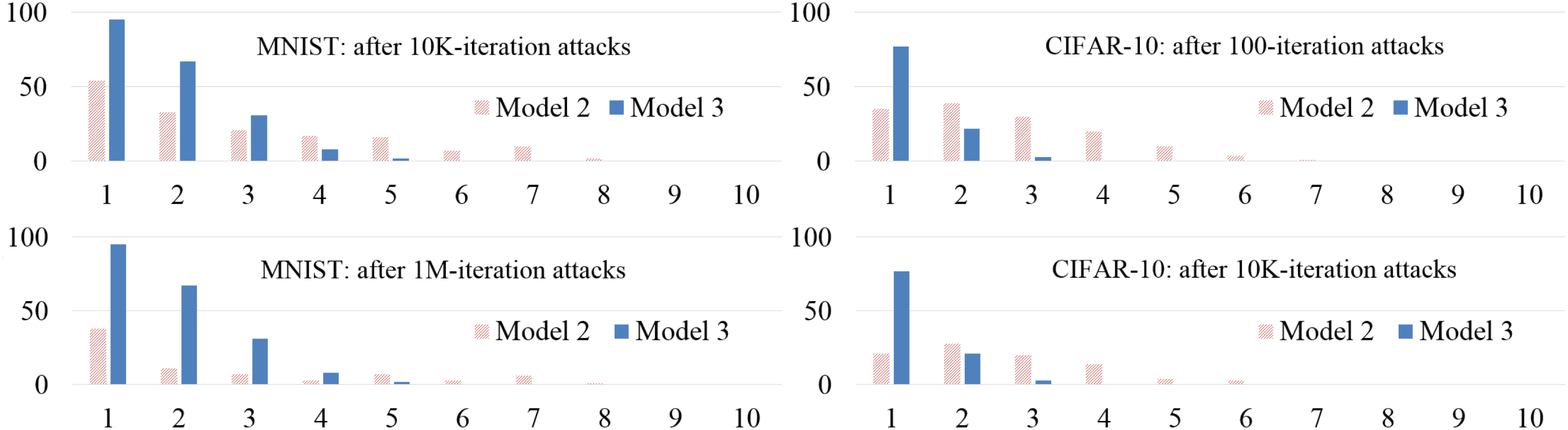}
\caption{Accuracy percentages of classifiers on test data bin-sorted by the confidence gap.}
\label{fig:bars}
\end{figure}

It is an important property that an L2NNN has an easily accessible measurement on how robust its decisions are.
Since robustness is easily measurable, it can be optimized directly, and we believe that this is the primary reason that we can demonstrate the robustness results of Tables~\ref{tbl:mnist} and \ref{tbl:cifar}.
This can also be valuable in real-life applications where we need to quantify how reliable a decision is.

One of the other practical implications of this property is that we can form hybrid models which use L2NNN outputs when the confidence is high and a different model when the confidence of the L2NNN is low.
This creates another dimension of trade-off between nominal accuracy and robustness that one can take advantage of in an application.
We built such a hybrid model for MNIST with the switch threshold of 1.0 and achieved nominal accuracy of 99.3\%, where only 6.9\% of images were delegated to the alternative classifier.
We built such a hybrid model for CIFAR-10 with the switch threshold of 0.1 and achieved nominal accuracy of 89.4\%, where 25\% of images were delegated.
To put these threshold values in context, MNIST Model 3 has an average gap of 2.8 and CIFAR-10 Model 3 has an average gap of 0.34.
In other words, if for a data point the L2NNN confidence gap is substantially below average, the classification is delegated to the alternative classifier, and this way we can recover nominal accuracy at a moderate cost of robustness.

\subsection{Generalization versus memorization} \label{sec:random}

This section studies L2NNN's generalization through a noisy-data experiment where we randomize some or all MNIST training labels.
The setup is similar to \citet{zhang_iclr17}, except that we added three scenarios where 25\%, 50\% and 75\% of training labels are scrambled.

\begin{table}[th]
\caption{Accuracy comparison of MNIST classifiers that are trained on noisy data.
Rand is the percentage of training labels that are randomized.
WD is weight decay.
DR is dropout.
ES is early stopping.
Gap1 is L2NNN's average confidence gap on training set and Gap2 is that on test set.}
\label{tbl:rand}
\centering
\begin{tabular}{lcccccccc}
\toprule
Rand & \multicolumn{5}{c}{Ordinary network}      & \multicolumn{3}{c}{L2NNN} \\
     & Vanilla & WD     & DR     & ES     & WD+DR+ES &        & Gap1 & Gap2 \\
\midrule
 0    & 99.4\% & 99.0\% & 99.2\% & 99.0\% & 99.3\%   & 98.7\% & 2.84 & 2.82   \\
 25\% & 90.4\% & 91.1\% & 91.8\% & 96.2\% & 98.0\%   & 98.5\% & 0.64 & 0.63   \\
 50\% & 65.5\% & 67.7\% & 72.6\% & 81.0\% & 88.3\%   & 96.0\% & 0.58 & 0.60   \\
 75\% & 41.5\% & 44.9\% & 41.8\% & 75.2\% & 66.4\%   & 93.1\% & 0.86 & 0.89   \\
 100\%&  9.7\% &  9.1\% &  9.4\% & NA     & NA       & 11.9\% & 0.09 & 0.01   \\
\bottomrule
\end{tabular}
\end{table}

\begin{table}[th]
\caption{Training-accuracy-versus-confidence-gap trade-off points of L2NNNs on 50\%-scrambled MNIST training labels.}
\label{tbl:fifty}
\centering
\begin{tabular}{cccc}
\toprule
\multicolumn{2}{c}{on training set} & \multicolumn{2}{c}{on test set} \\
 Accu. & Gap & Accu. & Gap \\
\midrule
 98.7\% & 0.17   & 79.0\% & 0.12   \\
 96.5\% & 0.21   & 79.3\% & 0.18   \\
 89.4\% & 0.22   & 86.3\% & 0.20   \\
 70.1\% & 0.36   & 93.4\% & 0.37   \\
 66.1\% & 0.45   & 93.7\% & 0.47   \\
 59.8\% & 0.58   & 96.0\% & 0.60   \\
\bottomrule
\end{tabular}
\end{table}

Table~\ref{tbl:rand} shows the comparison between L2NNNs and ordinary networks.
Dropout rate and weight-decay weight are tuned for each WD/DR run, and each WD+DR+ES run uses the combined hyperparameters from its row.
In early-stopping runs, 5000 training images are withheld as validation set and training stops when loss on validation set stops decreasing.
The L2NNNs do not use weight decay, dropout or early stopping.
L2NNNs achieve the best accuracy in all three partially-scrambled scenarios, and it is remarkable that an L2NNN can deliver 93.1\% accuracy on test set when three quarters of training labels are random.
More detailed data and discussions are in the appendix.

To illustrate why L2NNNs generalize better than ordinary networks from noisy data, we show in Table~\ref{tbl:fifty} trade-off points between accuracy and confidence gap on the 50\%-scrambled training set.
These trade-off points are achieved by changing hyperparameters $\omega$ in (\ref{eq:lossall}) and $v$ in (\ref{eq:loss1}).
In a noisy training set, there exist data points that are close to each other yet have different labels.
For a pair of such points, if an L2NNN is to classify both points correctly, the two confidence gaps must be small.
Therefore, in order to achieve large average confidence gap, an L2NNN must misclassify some of the training data.
In Table~\ref{tbl:fifty}, as we adjust the loss function to favor larger average gap, the L2NNNs are forced to make more and more mistakes on the training set.
The results suggest that loss is minimized when an L2NNN misclassifies some of the scrambled labels while fitting the 50\% original labels with large gaps, and parameter training discovers this trade-off automatically.
Hence we see in Table~\ref{tbl:fifty} increasing accuracies and gaps on the test set.
The above is a trade-off between memorization (training-set accuracy) and generalization (training-set average gap), and we hypothesize that L2NNN's trade-off between nominal accuracy and robustness, reported in Section~\ref{sec:defense}, is due to the same mechanism.
To be fair, dropout and early stopping are also able to sacrifice accuracy on a noisy training set, however they do so through different mechanisms that tend to be brittle, and Table~\ref{tbl:rand} suggests that L2NNN's mechanism is superior.
More discussions and the trade-off tables for 25\% and 75\% scenarios are in the appendix.

Another interesting observation is that the average confidence gap dramatically shrinks in the last row of Table~\ref{tbl:rand} where the training is pure memorization.
This is not surprising again due to training data points that are close to each other yet have different labels.
The practical implication is that after an L2NNN model is trained, one can simply measure its average confidence gap to know whether and how much it has learned to generalize rather than to memorize the training data.

\section{Related work} \label{sec:liter}

Adversarial defense is a well-known difficult problem \citep{szegedy,goodfellow,carlini,athalye2018obfuscated,gilmer2018adversarial}.
There are many avenues to defense \citep{carlini2017adversarial,meng2017magnet}, and here we will focus on defense works that fortify a neural network itself instead of introducing additional components.

The mainstream approach has been adversarial training, where examples of successful attacks on a classifier itself are used in training \citep{tramer2017ensemble,zantedeschi2017efficient}.
The work of \citet{mit} has the best results to date and effectively flattens gradients around training data points, and, prior to our work, it is the only work that achieves sizable white-box defense.
It has been reported in \citet{mindistort} that, for a small network, adversarial training indeed increases the average minimum $L_1$-norm and $L_{\infty}$-norm of noise needed to change its classification.
However, in view of results of Tables~\ref{tbl:mnist} and \ref{tbl:cifar}, adversarial-training results may be susceptible to strong attacks.
The works of \citet{drucker1992improving,ross2017improving} are similar to adversarial training in aiming to flatten gradients around training data set but use different mechanisms.

While the above approaches fortify a network around training data points, others aim to bound a network's responses to input perturbations over the entire input space.
For example, \citet{haber} models ResNet as an ordinary differential equation and derive stability conditions.
Other examples include \citet{kolter,raghu,wong2018scaling} which achieved provable guarantees against $L_{\infty}$-bounded attacks.
However there exist scalability issues with respect to network depth, and the reported results so far are against relatively weak attacks or low noise magnitude.
As shown in Table~\ref{tbl:li}, we can match their measured $L_{\infty}$-bounded defense.

Controlling Lipschitz constants also regularizes a network over the entire input space.
\citet{szegedy} is the seminal work that brings attention to this topic.
\citet{bartlett} proposes the notion of spectrally-normalized margins as an indicator of generalization, which are strongly related to our confidence gap.
\citet{pascanu2013difficulty} studies the role of the spectral radius of weight matrices in the vanishing and the exploding gradient problems.
\citet{yoshida2017spectral} proposes a method to regularize the spectral radius of weight matrices and shows its effect in reducing generalization gap.
The work on Parseval networks \citep{cisse} shows that it is possible to control Lipschitz constants of neural networks through regularization.
The core of their work is to constrain linear and convolution layer weights to be composed of Parseval tight frames, i.e., orthonormal filters, and thereby force the Lipschitz constant of these layers to be 1; they also propose to restrict aggregation operations.
The reported robustness results of \citet{cisse}, however, are much weaker than those by adversarial training in \citet{mit}.
We differ from Parseval networks in a number of ways.
Our linear and convolution layers do not require filters to be orthogonal to each other and subsume Parseval layers as a special case, and therefore provide more freedom to parameter training.
We use non-standard techniques, e.g. two-sided ReLU, to modify various network components to maximize confidence gaps while keeping the network nonexpansive, and we propose a new loss function for the same purpose.
We are unable to obtain Parseval networks for a direct comparison, however it is possible to get a rough idea of what the comparison might be by looking at Table~\ref{tbl:les} which shows the impacts of those new techniques.
The work of \citet{hein} makes an important point regarding guarantees provided by local Lipschitz constants, which helps explain many observations in our results, including why adversarial training on L2NNNs leads to lasting robustness gains.
The regularization proposed by \citet{hein} however is less practical and again introduces reliance on the coverage of training data points.

\section{Conclusions and future work}

In this work we have presented $L_2$-nonexpansive neural networks which are well-conditioned systems by construction.
Practical techniques are developed for building these networks.
Their properties are studied through experiments and benefits demonstrated, including that our MNIST and CIFAR-10 classifiers exceed the state of the art in robustness against white-box adversarial attacks, that they are robust against partially random training labels, and that they output confidence gaps which are strongly correlated with robustness and generalization.
There are a number of future directions, for example, other applications of L2NNN, L2NNN-friendly neural network architectures, and the relation between L2NNNs and interpretability.

\bibliography{l2nnn}
\bibliographystyle{iclr2019_conference}

\newpage

\appendix

\section{$L_2$-nonexpansive network components} \label{sec:parts}

\subsection{Additional methods for weight regularization}

There are numerous ways to utilize the bound of (\ref{eq:bound}).
The main text describes a simple method of using $W^\prime = W / \sqrt{b\left(W\right)}$ to enforce strict nonexpansiveness.
The following is an alternative.

Approximate nonexpansiveness can be achieved by adding a penalty to the loss function whenever $b\left(W\right)$ exceeds 1, for example:
\begin{equation} \label{eq:loss}
\mathcal{L}_W = \min \left( l(W^{\mathrm T}W), l(WW^{\mathrm T}) \right)
, \textrm{ where } l \left( M \right) = \sum_i \max\left(\sum_j \left| M_{i,j} \right|-1,0\right)
\end{equation}
The sum of (\ref{eq:loss}) losses over all layers becomes a fourth term in the loss function (\ref{eq:lossall}), multiplied with one additional hyperparameter. This would lead to an approximate L2NNN with trade-offs between how much its layers violate (\ref{eq:radius}) with surrogate (\ref{eq:bound}) versus other objectives in the loss function.

In practice, we have found that it is beneficial to begin neural network training with the regularization scheme of (\ref{eq:loss}), which allows larger learning rates, and switch to the first scheme of using $W^\prime$, which avoids artifacts of an extra hyperparameter, when close to convergence.
Of course if the goal is building approximate L2NNNs one can use (\ref{eq:loss}) all the way.

\subsection{Sigmoid and others}

Sigmoid is nonexpansive as is, but does not preserve distance as much as possible. A better way is to replace sigmoid with the following operator
\begin{equation} \label{eq:sigmoid}
s\left(x\right) = t \cdot \mathrm{sigmoid} \left( \frac {4x}{t}  \right)
\end{equation}
where $t>0$ is a trainable parameter and each neuron has its own $t$.
In general, the requirement for any scalar nonlinearity is that its derivative is bounded between -1 and 1. If a nonlinearity violates this condition, a shrinking multiplier can be applied. If the actual range of derivative is narrower, as in the case of sigmoid, an enlarging multiplier can be applied to preserve distance.

For further improvement, (\ref{eq:sigmoid}) can be combined with the general form of the two-sided ReLU of Section~2.2.
Then the new nonlinearity is a function from $\mathbb{R}$ to $\mathbb{R}^2$ that computes $s(x)$ and $s(x)-x$.

\subsection{Splitting and reconvergence}

There are different kinds of splitting in neural networks. Some splitting is not followed by reconvergence. For example, a classifier may have common layers followed by split layers for each label, and such an architecture can be viewed as multiple L2NNNs that overlap at the common layers and each contain one stack of split layers. In such cases, no modification is needed because there is no splitting within each individual L2NNN.

Some splitting, however, is followed by reconvergence. In fact, convolution and pooling layers discussed earlier can be viewed as splitting, and reconvergence happens at the next layer. Another common example is skip-level connections such as in ResNet. Such splitting should be viewed as making two copies of a certain vector. Let the before-split vector be ${\bf x}_0$, and we make two copies as
\begin{equation} \label{eq:split}
\begin{aligned}
{\bf x}_1 & = t \cdot {\bf x}_0 \\
{\bf x}_2 & = \sqrt{1-t^2} \cdot {\bf x}_0
\end{aligned}
\end{equation}
where $t \in \left[0,1\right]$ is a trainable parameter.

In the case of ResNet, the reconvergence is an add operator, which should be treated as vector-matrix multiplication as in Section~2.1, but with much simplified forms. Let ${\bf x}_1$ be the skip-level connections and $f\left({\bf x}_2\right)$ be the channels of convolution outputs to be added with ${\bf x}_1$, we perform the addition as
\begin{equation} \label{eq:reconv}
{\bf y} =  t \cdot {\bf x}_1 + \sqrt{1-t^2} \cdot f\left({\bf x}_2\right)
\end{equation}
where $t \in \left[0,1\right]$ is a trainable parameter and could be a common parameter with (\ref{eq:split}).

ResNet-like reconvergence is referred to as aggregation layers in \citet{cisse} and a different formula was used:
\begin{equation}
{\bf y} =  \alpha \cdot {\bf x}_1 + \left(1-\alpha\right) \cdot f\left({\bf x}_2\right)
\end{equation}
where $\alpha \in \left[0,1\right]$ is a trainable parameter.
Because splitting is not modified in \citet{cisse}, their scheme may seem approximately equivalent to ours if a common $t$ parameter is used for (\ref{eq:split}) and (\ref{eq:reconv}).
However, there is a substantial difference: in many ResNet blocks, $f\left({\bf x}_2\right)$ is a subset of rather than all of the output channels of convolution layers, and our scheme does not apply the shrinking factor of $\sqrt{1-t^2}$ on channels that are not part of $f\left({\bf x}_2\right)$ and therefore better preserve distances.
In contrast, because splitting is not modified, at reconvergence the scheme of \citet{cisse} must apply the shrinking factor of $1-\alpha$ on all outputs of convolution layers, regardless of whether a channel is part of the aggregation or not.
To state the difference in more general terms, our scheme enables splitting and reconvergence at arbitrary levels of granularity and multiplies shrinking factors to only the necessary components.
We can also have a different $t$ per channel or even per entry.

To be fair, the scheme of \citet{cisse} has an advantage of being nonexpansive with respect to any $L_p$-norm.
However, for $L_2$-norm, it is inferior to ours in preserving distances and maximizing confidence gaps.

\subsection{Recursion} \label{sec:rnn}

There are multiple ways to interpret recurrent neural networks (RNN) as L2NNNs. One way is to view an unrolled RNN as multiple overlapping L2NNNs where each L2NNN generates the output at one time step. Under this interpretation, nothing special is needed and recurrent inputs to a neuron are simply treated as ordinary inputs.

Another way to interpret an RNN is to view unrolled RNN as a single L2NNN that generates outputs at all time steps. Under this interpretation, recurrent connections are treated as splitting at their sources and should be handled as in (\ref{eq:split}).

\subsection{Normalization}

Normalization operations are limited in an L2NNN.
Subtracting mean is nonexpansive and allowed, and subtract-mean operation can be performed on arbitrary subsets of any layer.
Subtracting batch mean is also allowed because it can be viewed as subtracting a bias parameter.
However, scaling, e.g., division by standard deviation or batch standard deviation is only allowed if the multiplying factors are between -1 and 1.
To satisfy this in practice, one simple method is to divide all multiplying factors in a normalization layer by the largest of their absolute values.

\section{MNIST images}

\begin{figure}[th]
\centering
\includegraphics[width=0.87\columnwidth]{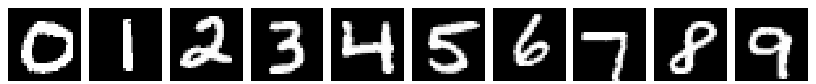}\\
Gap $\quad$\,\, 5.1 $\quad$\,\, 4.4 $\quad$\,\, 5.1 $\quad$\,\, 4.6 $\quad$\,\, 5.0 $\quad$\,\, 4.4 $\quad$\,\, 4.6 $\quad$\,\, 4.5 $\quad$\,\, 4.0 $\quad$\,\, 3.1 \\
\includegraphics[width=0.87\columnwidth]{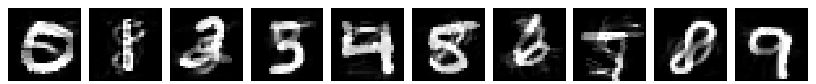}\\
Mstk $\quad$\, 5 $\quad\quad$\, 8 $\quad\quad$\, 3 $\quad\quad$\, 5 $\quad\quad$\, 9 $\quad\quad$\, 8 $\quad\quad$ 2 $\quad\quad$\, 5 $\quad\quad$ 0 $\quad\quad$ 7 \, \\
Dist $\quad$\, 4.8 $\quad$\,\, 3.6 $\quad$\,\, 4.8 $\quad$\,\,\, 3.4 $\quad$\,\, 4.1 $\quad$\,\,  3.7 $\quad$\,\, 4.0 $\quad$\,\, 4.5$\quad$\,\,\, 3.8 $\quad$\,\, 2.3
\caption{Original and distorted images of MNIST digits in test set with the largest confidence gaps.
Mstk denotes the misclassified labels. Dist denotes the $L_2$-norm of the distortion noise.}
\label{fig:digits1}
\end{figure}

\begin{figure}[th]
\centering
\includegraphics[width=0.9\columnwidth]{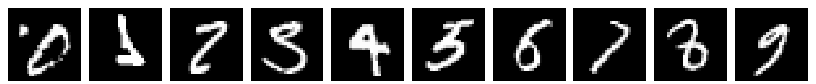}\\
Gap $\quad$\,\,\,\, 0.03 $\quad$\,\, 0.2 $\quad$\,\, 0.1 $\quad$\,\, 0.03 $\quad$\, 0.03 $\quad$       0.001 $\quad$\, 0.3 $\quad$\,   0.06 \,\,\,\,\, 0.01 \,\,\,\, 0.005\\
\includegraphics[width=0.9\columnwidth]{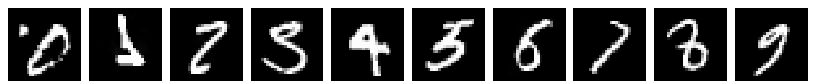}\\
Mstk $\quad$\,\,\,\, 8 $\quad\quad$\,\, 6 $\quad\quad$\, 8 $\quad\quad$\, 5 $\quad\quad$\,\, 9 $\quad\quad$\,\, 3 $\quad\quad$\,\,\, 5 $\quad\quad$\, 2 $\quad\quad$\, 3 $\quad\quad$ 7 \,\,\, \\
Dist $\quad$\,\,\, 0.04 $\quad$\,\, 0.3 $\quad$\,\, 0.1 $\quad$\,\, 0.02 $\quad$\, 0.03 $\quad$        0.001 $\quad$\,   0.3 $\quad$\,   0.05 \,\,\,\,\,   0.02 \,\,\,\, 0.01
\caption{Original and distorted images of MNIST digits in test set with the smallest confidence gaps.
Mstk denotes the misclassified output label. Dist denotes the $L_2$-norm of the distortion noise.}
\label{fig:digits2}
\end{figure}

Let us begin by showing MNIST images with the largest confidence gaps in Figure~\ref{fig:digits1} and those with the smallest confidence gaps in Figure~\ref{fig:digits2}.
They include images before and after attacks as well as Model 3's confidence gap, the misclassified label and $L_2$-norm of the added noise.
The images with large confidence gaps seem to be ones that are most different from other digits, while some of the images with small confidence gaps are genuinely ambiguous.
It's worth noting the strong correlation between the confidence gap of L2NNN and the magnitude of distortion needed to force it to misclassify.
Also note that our guarantee states that the minimum $L_2$-norm of noise is half of the confidence gap, but in reality the needed noise is much stronger than the guarantee.
The reason is that the true local guarantee is in fact larger due to local Lipschitz constants, as pointed out by \citet{hein}.

\begin{figure}[h]
\centering
\includegraphics[width=0.1\columnwidth]{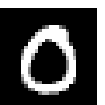}
\includegraphics[width=0.1\columnwidth]{1k}
\includegraphics[width=0.1\columnwidth]{10k}
\includegraphics[width=0.1\columnwidth]{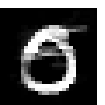}
\caption{Original image of 0; attack on Model 2 \citep{mit} found after 1K iterations; attack on Model 2 found after 10K iterations; attack on Model 3 (L2NNN) found after 1M iterations. The latter three all lead to misclassification as 5.}
\label{fig:zeroes}
\end{figure}

Figure~\ref{fig:zeroes} shows additional details regarding the example in Figure~\ref{fig:zero}.
The first image is the original image of a zero.
The second image is an attack on Model 2 \citep{mit} found after 1K iterations, with noise $L_2$-norm of 4.4.
The third is one found after 10K iterations for Model 2, with noise $L_2$-norm of 2.1.
The last image is the best attack on our Model 3 found after one million iterations, with noise $L_2$-norm of 3.5.
These illustrates the trend shown in Table~\ref{tbl:mnist} that the defense by adversarial training diminishes as the attacks are allowed more iterations, while L2NNNs withstand strong attacks and it requires more noise to fool an L2NNN.
It's worth noting that the slow degradation of Model 2's accuracy is an artifact of the attacker \citep{carlini}: when gradients are near zero in some parts of the input space, which is true for MNIST Model 2 due to adversarial training, it takes more iterations to make progress.
It is conceivable that, with a more advanced attacker, Model 2 could drop quickly to 7.6\%.
What truly matter are the robust accuracies where we advance the state of the art from 7.6\% to 24.4\%.

\section{Details of scrambled-label experiments}

For ordinary networks in Table~\ref{tbl:rand}, we use two network architectures.
The first has 4 layers and is the architecture used in \citet{mit}.
The second has 22 layers and is the architecture of Models 3 and 4 in Table~\ref{tbl:mnist}, which includes norm-pooling and two-sided ReLU.
Results of ordinary networks using these two architectures are in Tables~\ref{tbl:shallow} and \ref{tbl:deep} respectively.
The ordinary-network section of Table~\ref{tbl:rand} is entry-wise max of Tables~\ref{tbl:shallow} and \ref{tbl:deep}.

\begin{table}[th]
\caption{Accuracies of non-L2NNN MNIST classifiers that use a 4-layer architecture and that are trained on training data with various amounts of scrambled labels.
Rand is the percentage of training labels that are randomized.
WD is weight decay.
DR is dropout.
ES is early stopping.}
\label{tbl:shallow}
\centering
\begin{tabular}{lccccc}
\toprule
Rand & \multicolumn{5}{c}{Ordinary network}    \\
     & Vanilla & WD     & DR     & ES     & WD+DR+ES  \\
\midrule
 0    & 98.9\% & 99.0\% & 99.2\% & 99.0\% & 99.3\%  \\
 25\% & 82.5\% & 91.1\% & 91.8\% & 79.1\% & 98.0\%   \\
 50\% & 57.7\% & 67.7\% & 72.6\% & 66.4\% & 88.3\%  \\
 75\% & 32.1\% & 44.9\% & 41.8\% & 52.7\% & 66.4\%   \\
 100\%&  9.5\% &  8.9\% &  9.4\% & NA     & NA    \\
\bottomrule
\end{tabular}
\end{table}

\begin{table}[th]
\caption{Accuracies of non-L2NNN MNIST classifiers that use a 22-layer architecture and that are trained on training data with various amounts of scrambled labels.
Rand is the percentage of training labels that are randomized.
WD is weight decay.
DR is dropout.
ES is early stopping.}
\label{tbl:deep}
\centering
\begin{tabular}{lccccc}
\toprule
Rand & \multicolumn{5}{c}{Ordinary network}    \\
     & Vanilla & WD     & DR     & ES     & WD+DR+ES  \\
\midrule
 0    & 99.4\% & 99.0\% & 99.0\% & 99.0\% & 99.0\%  \\
 25\% & 90.4\% & 86.5\% & 89.8\% & 96.2\% & 90.3\%   \\
 50\% & 65.5\% & 62.5\% & 63.7\% & 81.0\% & 83.1\%  \\
 75\% & 41.5\% & 38.2\% & 40.2\% & 75.2\% & 61.9\%   \\
 100\%&  9.7\% &  9.1\% &  8.8\% & NA     & NA      \\
\bottomrule
\end{tabular}
\end{table}

In Tables~\ref{tbl:shallow} and \ref{tbl:deep}, dropout rate and weight-decay weight are tuned for each WD/DR run, and each WD+DR+ES run uses the combined hyperparameters from its row.
In early-stopping runs, 5000 training images are withheld as validation set and training stops when loss on validation set stops decreasing.
Each ES or WD+DR+ES entry is an average over ten runs to account for randomness of the validation set.
The L2NNNs do not use weight decay, dropout or early stopping.

\begin{table}[th]
\caption{Training-accuracy-versus-confidence-gap trade-off points of L2NNNs on 25\%-scrambled MNIST training labels.}
\label{tbl:twentyfive}
\centering
\begin{tabular}{cccc}
\toprule
\multicolumn{2}{c}{on training set} & \multicolumn{2}{c}{on test set} \\
 Accu. & Gap & Accu. & Gap \\
\midrule
 99.6\% & 0.12   & 92.6\% & 0.10   \\
 97.6\% & 0.20   & 95.7\% & 0.17   \\
 78.6\% & 0.31   & 98.2\% & 0.30   \\
 77.2\% & 0.64   & 98.5\% & 0.63   \\
\bottomrule
\end{tabular}
\end{table}

\begin{table}[th]
\caption{Training-accuracy-versus-confidence-gap trade-off points of L2NNNs on 75\%-scrambled MNIST training labels.}
\label{tbl:seventyfive}
\centering
\begin{tabular}{cccc}
\toprule
\multicolumn{2}{c}{on training set} & \multicolumn{2}{c}{on test set} \\
 Accu. & Gap & Accu. & Gap \\
\midrule
 97.9\% & 0.07   & 49.8\% & 0.03   \\
 93.0\% & 0.09   & 59.2\% & 0.05   \\
 75.9\% & 0.10   & 70.0\% & 0.08   \\
 58.0\% & 0.18   & 80.4\% & 0.17   \\
 46.2\% & 0.29   & 86.8\% & 0.30   \\
 40.1\% & 0.44   & 89.8\% & 0.46   \\
 34.7\% & 0.86   & 93.1\% & 0.89   \\
\bottomrule
\end{tabular}
\end{table}

Table~\ref{tbl:twentyfive} shows L2NNN trade-off points between accuracy and confidence gap on the 25\%-scrambled training set.
Table~\ref{tbl:seventyfive} shows L2NNN trade-off points between accuracy and confidence gap on the 75\%-scrambled training set.
Like Table~\ref{tbl:fifty}, they demonstrate the trade-off mechanism between memorization (training-set accuracy) and generalization (training-set average gap).

To be fair, dropout and early stopping are also able to sacrifice accuracy on a noisy training set.
For example, the DR run in the 50\%-scrambled row in Table~\ref{tbl:shallow} has 67.5\% accuracy on the training set and 72.6\% on the test set.
However, the underlying mechanisms are very different from that of L2NNN.
Dropout \citep{dropout} has an effect of data augmentation, and, with a noisy training set, dropout can create a situation where the effective data complexity exceeds the network capacity.
Therefore, the parameter training is stalled at a lowered accuracy on the training set, and we get better performance if the model tends to fit more of original labels and less of the scrambled labels.
The mechanism of early stopping is straightforward and simply stops the training when it is mostly memorizing scrambled labels.
We get better performance from early stopping if the parameter training tends to fit the original labels early.
These mechanisms from dropout and early stopping are both brittle and may not allow parameter training enough opportunity to learn from the useful data points with original labels.
The comparison in Table~\ref{tbl:rand} suggests that they are inferior to L2NNN's trade-off mechanism as discussed in Section~\ref{sec:random} and illustrated in Tables~\ref{tbl:fifty}, \ref{tbl:twentyfive} and \ref{tbl:seventyfive}.
The L2NNNs in this paper do not use weight decay, dropout or early stopping, however it is conceivable that dropout may be complementary to L2NNNs.

\section{Proofs}\label{sec:proofs}

\newtheorem{lemma}{Lemma}
\begin{lemma}\label{th:sqrt2}
Let $g\left({\bf x}\right)$ denote a single-L2NNN classifier's confidence gap for an input data point ${\bf x}$.
The classifier will not change its answer as long as the input ${\bf x}$ is modified by no more than an $L_2$-norm of $g\left({\bf x}\right)/\sqrt{2}$.
\end{lemma}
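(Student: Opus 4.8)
The plan is to exploit the single global nonexpansiveness of the classifier together with a projection argument that produces the factor $\sqrt{2}$. Write the logit vector as $f\left({\bf x}\right) = \left(y_1, \ldots, y_K\right)$, let $i$ be the index of the largest logit (so the predicted label is $i$), and let $j$ be the index of the second largest, so that $g\left({\bf x}\right) = y_i - y_j$. First I would record that, since the entire classifier is a single L2NNN, it is a nonexpansive map into $\mathbb{R}^K$; hence for any perturbed input ${\bf x}^\prime$,
\[
\left\| f\left({\bf x}^\prime\right) - f\left({\bf x}\right) \right\|_2 \leq \left\| {\bf x}^\prime - {\bf x} \right\|_2 .
\]

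Next I would characterize when the answer can change. Let ${\bf p} = f\left({\bf x}^\prime\right) - f\left({\bf x}\right)$ denote the change in the logits, so that $y^\prime_k = y_k + p_k$. For the predicted label to switch from $i$ to some $k \neq i$ we need $y^\prime_k \geq y^\prime_i$, i.e.\ $p_k - p_i \geq y_i - y_k$. Because $y_j$ is the second largest logit, $y_k \leq y_j$ for every $k \neq i$, so the right-hand side is at least $y_i - y_j = g\left({\bf x}\right)$; thus any change of answer forces $p_k - p_i \geq g\left({\bf x}\right)$ for some $k \neq i$.

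The key step is to bound the coordinate difference $p_k - p_i$ by the norm of the whole vector ${\bf p}$. Writing $p_k - p_i = \langle {\bf p}, {\bf e}_k - {\bf e}_i \rangle$ and applying Cauchy--Schwarz with $\left\| {\bf e}_k - {\bf e}_i \right\|_2 = \sqrt{2}$ gives $p_k - p_i \leq \sqrt{2}\,\left\| {\bf p} \right\|_2$. Combined with the previous step, a change of answer requires $\sqrt{2}\,\left\| {\bf p} \right\|_2 \geq g\left({\bf x}\right)$, and nonexpansiveness then forces $\left\| {\bf x}^\prime - {\bf x} \right\|_2 \geq \left\| {\bf p} \right\|_2 \geq g\left({\bf x}\right)/\sqrt{2}$. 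Taking the contrapositive, any perturbation of $L_2$-norm below $g\left({\bf x}\right)/\sqrt{2}$ leaves the predicted label unchanged.

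The main obstacle is conceptual rather than computational: recognizing that the relevant quantity is the projection of ${\bf p}$ onto the single direction ${\bf e}_k - {\bf e}_i$, which is exactly where the improvement over the multi-L2NNN guarantee of $g\left({\bf x}\right)/2$ (Lemma~\ref{th:multi2}) originates. In the multi-L2NNN setting each logit is its own nonexpansive map, so one can only bound $p_k - p_i \leq |p_k| + |p_i| \leq 2\left\| {\bf x}^\prime - {\bf x} \right\|_2$; here the single shared budget on $\left\| {\bf p} \right\|_2$ lets the two coordinates trade off against one another, and the geometry of the difference of two basis vectors yields $\sqrt{2}$ in place of $2$. A minor point to settle is the boundary case where $y^\prime_k = y^\prime_i$ exactly; reading ``not change its answer'' so as to permit ties (or carrying strict inequalities throughout) disposes of it without affecting the bound.
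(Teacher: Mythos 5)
Your proof is correct and follows essentially the same route as the paper's: both reduce the problem to bounding the change in the difference of two logit coordinates by $\sqrt{2}$ times the $L_2$-norm of the full logit perturbation, which is then controlled by nonexpansiveness. Your Cauchy--Schwarz step with ${\bf e}_k - {\bf e}_i$ is just a repackaging of the paper's inequality $a^2 + b^2 \geq \left(a+b\right)^2/2$ applied to the two relevant coordinates, and your handling of an arbitrary overtaking label $k$ matches the paper's treatment of the new label $j$.
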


\begin{proof}
Let ${\bf y}\left({\bf x}\right) = \left[ y_1\left({\bf x}\right),y_2\left({\bf x}\right),\cdots,y_K\left({\bf x}\right) \right]$ denote logit vector of a single-L2NNN classifier for an input data point ${\bf x}$.
Let ${\bf x_1}$ and ${\bf x_2}$ be two input vectors such that the classifier outputs different labels $i$ and $j$.
By definitions, we have the following inequalities:
\begin{equation}
\begin{aligned}
y_i\left({\bf x_1}\right) - y_j\left({\bf x_1}\right) & \geq g\left({\bf x_1}\right) \\
y_i\left({\bf x_2}\right) - y_j\left({\bf x_2}\right) & \leq 0
\end{aligned}
\end{equation}
Because the classifier is a single L2NNN, it must be true that:
\begin{equation}
\begin{aligned}
\| {\bf x_2} - {\bf x_1} \|_2 & \geq \| {\bf y}\left({\bf x_2}\right) - {\bf y}\left({\bf x_1}\right) \|_2 \\
 & \geq \sqrt{ \left( y_i\left({\bf x_2}\right) - y_i\left({\bf x_1}\right) \right)^2 + \left( y_j\left({\bf x_2}\right) - y_j\left({\bf x_1}\right) \right)^2 } \\
 & =    \sqrt{ \left( y_i\left({\bf x_1}\right) - y_i\left({\bf x_2}\right) \right)^2 + \left( y_j\left({\bf x_2}\right) - y_j\left({\bf x_1}\right) \right)^2 } \\
 & \geq \sqrt{\frac{ \left( y_i\left({\bf x_1}\right) - y_i\left({\bf x_2}\right) + y_j\left({\bf x_2}\right) - y_j\left({\bf x_1}\right) \right)^2 }{2}} \\
 & =   \sqrt{\frac{ \left( \left( y_i\left({\bf x_1}\right) - y_j\left({\bf x_1}\right) \right) + \left( y_j\left({\bf x_2}\right) - y_i\left({\bf x_2}\right) \right) \right)^2 }{2}} \\
 & \geq \sqrt{\frac{ \left( g\left({\bf x_1}\right) + 0 \right)^2 }{2}} \\
 & = g\left({\bf x_1}\right)/\sqrt{2}
\end{aligned}
\end{equation}
\end{proof}

\begin{lemma} \label{th:gapsqrt2}
Let $g\left({\bf x}\right)$ denote a classifier's confidence gap for an input data point ${\bf x}$.
Let $d\left({\bf x_1},{\bf x_2}\right)$ denote the $L_2$-distance between the output logit-vectors for two input points ${\bf x_1}$ and ${\bf x_2}$ that have different labels and that are classified correctly.
Then this condition holds: $g\left({\bf x_1}\right)+g\left({\bf x_2}\right) \leq \sqrt{2} \cdot d\left({\bf x_1},{\bf x_2}\right)$.
\end{lemma}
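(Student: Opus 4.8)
The plan is to follow essentially the same coordinate-wise estimate used in the proof of Lemma~\ref{th:sqrt2}, but to apply it directly to the output logit vectors. Since here $d\left({\bf x_1},{\bf x_2}\right)$ is defined as the output distance rather than an input distance, the nonexpansiveness step is not even needed, so the argument is strictly shorter than that of Lemma~\ref{th:sqrt2}.

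First I would fix notation. Because ${\bf x_1}$ and ${\bf x_2}$ are classified correctly with different labels, let $i$ be the label of ${\bf x_1}$ and $j$ the label of ${\bf x_2}$, with $i \neq j$. By the definition of confidence gap, correct classification of ${\bf x_1}$ as $i$ forces $y_i\left({\bf x_1}\right) - y_j\left({\bf x_1}\right) \geq g\left({\bf x_1}\right)$, and likewise correct classification of ${\bf x_2}$ as $j$ forces $y_j\left({\bf x_2}\right) - y_i\left({\bf x_2}\right) \geq g\left({\bf x_2}\right)$. Next I would lower-bound $d\left({\bf x_1},{\bf x_2}\right)^2 = \sum_k \left(y_k\left({\bf x_1}\right)-y_k\left({\bf x_2}\right)\right)^2$ by discarding every coordinate except $i$ and $j$, keeping only $\left(y_i\left({\bf x_1}\right)-y_i\left({\bf x_2}\right)\right)^2 + \left(y_j\left({\bf x_1}\right)-y_j\left({\bf x_2}\right)\right)^2$. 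Writing $a = y_i\left({\bf x_1}\right)-y_i\left({\bf x_2}\right)$ and $b = y_j\left({\bf x_2}\right)-y_j\left({\bf x_1}\right)$, this equals $a^2+b^2$, and the elementary inequality $a^2+b^2 \geq \left(a+b\right)^2/2$ (equivalent to $\left(a-b\right)^2 \geq 0$) then gives $d\left({\bf x_1},{\bf x_2}\right) \geq \left|a+b\right|/\sqrt{2}$.

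Finally I would regroup $a+b = \left(y_i\left({\bf x_1}\right)-y_j\left({\bf x_1}\right)\right) + \left(y_j\left({\bf x_2}\right)-y_i\left({\bf x_2}\right)\right)$, which by the two gap inequalities of the previous paragraph is at least $g\left({\bf x_1}\right)+g\left({\bf x_2}\right) \geq 0$, so $\left|a+b\right| \geq a+b \geq g\left({\bf x_1}\right)+g\left({\bf x_2}\right)$; combining with the previous bound yields $\sqrt{2}\cdot d\left({\bf x_1},{\bf x_2}\right) \geq g\left({\bf x_1}\right)+g\left({\bf x_2}\right)$, as claimed. There is no genuine obstacle here; the only step requiring care is the regrouping that splits the two squared differences across the two data points so that the cross terms reassemble into the two confidence gaps, exactly mirroring the middle lines of the Lemma~\ref{th:sqrt2} computation. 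Note also that the result holds for any classifier, not just an L2NNN, since nothing in the argument uses nonexpansiveness.
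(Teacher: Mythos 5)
Your proof is correct and follows essentially the same route as the paper's: restrict to coordinates $i$ and $j$, apply $a^2+b^2 \geq (a+b)^2/2$, and regroup the cross terms into the two confidence gaps. Your closing observation that nonexpansiveness is never used is also consistent with the paper, which states the lemma for an arbitrary classifier.
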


\begin{proof}
Let ${\bf y}\left({\bf x}\right) = \left[ y_1\left({\bf x}\right),y_2\left({\bf x}\right),\cdots,y_K\left({\bf x}\right) \right]$ denote logit vector of a classifier for an input data point ${\bf x}$.
Let $i$ and $j$ be the labels for ${\bf x_1}$ and ${\bf x_2}$.
By definitions, we have the following inequalities:
\begin{equation}
\begin{aligned}
y_i\left({\bf x_1}\right) - y_j\left({\bf x_1}\right) & \geq g\left({\bf x_1}\right) \\
y_j\left({\bf x_2}\right) - y_i\left({\bf x_2}\right) & \geq g\left({\bf x_2}\right)
\end{aligned}
\end{equation}
Therefore,
\begin{equation}
\begin{aligned}
d\left({\bf x_1},{\bf x_2}\right) & \triangleq \| {\bf y}\left({\bf x_2}\right) - {\bf y}\left({\bf x_1}\right) \|_2 \\
 & \geq \sqrt{ \left( y_i\left({\bf x_2}\right) - y_i\left({\bf x_1}\right) \right)^2 + \left( y_j\left({\bf x_2}\right) - y_j\left({\bf x_1}\right) \right)^2 } \\
 & =    \sqrt{ \left( y_i\left({\bf x_1}\right) - y_i\left({\bf x_2}\right) \right)^2 + \left( y_j\left({\bf x_2}\right) - y_j\left({\bf x_1}\right) \right)^2 } \\
 & \geq \sqrt{\frac{ \left( y_i\left({\bf x_1}\right) - y_i\left({\bf x_2}\right) + y_j\left({\bf x_2}\right) - y_j\left({\bf x_1}\right) \right)^2 }2} \\
 & =   \sqrt{\frac{ \left( \left( y_i\left({\bf x_1}\right) - y_j\left({\bf x_1}\right) \right) + \left( y_j\left({\bf x_2}\right) - y_i\left({\bf x_2}\right) \right) \right)^2 }2} \\
 & \geq \sqrt{\frac{ \left( g\left({\bf x_1}\right) + g\left({\bf x_2}\right) \right)^2 }2} \\
 & = \frac{ g\left({\bf x_1}\right) + g\left({\bf x_2}\right) }{\sqrt{2}}
\end{aligned}
\end{equation}
\end{proof}

\begin{lemma} \label{th:ineq}
For any $a \geq 0$, $b \geq 0$, $p \geq 1$, the following inequality holds: $a^p + b^p \leq \left( a+b \right)^p$.
\end{lemma}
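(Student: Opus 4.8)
The plan is to reduce the claim to the unit simplex by normalizing. First I would dispose of the degenerate case $a=b=0$, where both sides equal $0$ (interpreting $0^p=0$ for $p\geq 1$). For the remaining case $a+b>0$, I would divide the desired inequality by the positive quantity $(a+b)^p$, which is legitimate since $a+b>0$. Setting $s=a/(a+b)$ and $t=b/(a+b)$ turns the goal into the equivalent statement $s^p+t^p\leq 1$ subject to the constraints $s,t\in[0,1]$ and $s+t=1$.

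The key step is the elementary fact that $x^p\leq x$ whenever $x\in[0,1]$ and $p\geq 1$. This holds because $x^p=x\cdot x^{p-1}$, and since $0\leq x\leq 1$ and $p-1\geq 0$ we have $x^{p-1}\leq 1$, so multiplying by the nonnegative factor $x$ gives $x^p\leq x$. Applying this to both $s$ and $t$ yields $s^p+t^p\leq s+t=1$, which is exactly the normalized inequality; multiplying back through by $(a+b)^p$ then recovers $a^p+b^p\leq(a+b)^p$.

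Alternatively, a one-variable calculus argument works: fixing $b$ and setting $f(a)=(a+b)^p-a^p-b^p$, I would check that $f(0)=0$ and that $f'(a)=p\big((a+b)^{p-1}-a^{p-1}\big)\geq 0$, because the map $t\mapsto t^{p-1}$ is nondecreasing on $[0,\infty)$ for $p\geq 1$ and $a+b\geq a\geq 0$; hence $f(a)\geq f(0)=0$ for all $a\geq 0$.

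There is no genuine obstacle here. The only points demanding care are the handling of the degenerate case $a+b=0$ and the justification of the monotonicity fact $x^p\leq x$ on $[0,1]$, both of which are routine. I would favor the normalization argument for its brevity, since it avoids differentiability considerations and reduces the whole statement to the single observation that a number in $[0,1]$ does not increase when raised to a power at least $1$.
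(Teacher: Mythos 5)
Your normalization argument is exactly the paper's proof: after disposing of the case $a=b=0$, the paper writes $a^p+b^p=(a+b)^p\bigl(\tfrac{a}{a+b}\bigr)^p+(a+b)^p\bigl(\tfrac{b}{a+b}\bigr)^p$ and bounds each fraction raised to the power $p$ by the fraction itself, which is your observation that $x^p\leq x$ on $[0,1]$. The proposal is correct and coincides with the paper's route; the calculus alternative you mention is a valid but unnecessary extra.
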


\begin{proof}
If $a$ and $b$ are both zero, the inequality holds.
If at least one of $a$ and $b$ is nonzero:
\begin{equation}
\begin{aligned}
a^p + b^p & = \left( a+b \right)^p\cdot\left( \frac{a}{a+b} \right)^p + \left( a+b \right)^p\cdot\left( \frac{b}{a+b} \right)^p \\
 & \leq \left( a+b \right)^p\cdot\frac{a}{a+b} + \left( a+b \right)^p\cdot\frac{b}{a+b} \\
 & = \left( a+b \right)^p
\end{aligned}
\end{equation}
\end{proof}

\begin{lemma} \label{th:twoside}
Let $f(x)$ be a nonexpansive and monotonically increasing scalar function.
Define a function from $\mathbb{R}$ to $\mathbb{R}^2$: ${\bf h}(x)=[f(x),f(x)-x]$.
Then ${\bf h}(x)$ is nonexpansive with respect to any $L_p$-norm.
\end{lemma}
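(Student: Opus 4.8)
The plan is to reduce the claim to the elementary inequality already established in Lemma~\ref{th:ineq}. Since the domain of ${\bf h}$ is $\mathbb{R}$, nonexpansiveness with respect to any $L_p$-norm means that for all real $x_1,x_2$ we must show $\|{\bf h}(x_1)-{\bf h}(x_2)\|_p \leq |x_1-x_2|$, where the right-hand side is just the absolute value because the input is a scalar. By the symmetry of the statement under swapping $x_1$ and $x_2$, I may assume without loss of generality that $x_1 \geq x_2$.

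First I would introduce the shorthand $\delta = x_1 - x_2 \geq 0$ and $a = f(x_1) - f(x_2)$. The two hypotheses on $f$ pin down $a$ tightly: monotonicity gives $a \geq 0$, and nonexpansiveness gives $a \leq \delta$, so $0 \leq a \leq \delta$. Next I compute the coordinate-wise difference of the outputs:
\begin{equation}
{\bf h}(x_1) - {\bf h}(x_2) = \left[\, f(x_1)-f(x_2),\ \left(f(x_1)-x_1\right)-\left(f(x_2)-x_2\right) \,\right] = \left[\, a,\ a-\delta \,\right].
\end{equation}
Because $0 \leq a \leq \delta$, the first entry has absolute value $a$ and the second has absolute value $\delta - a$ (the quantity $a-\delta$ being nonpositive); thus, after taking absolute values, both coordinates are nonnegative and they sum to $\delta$.

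The heart of the argument is then immediate: the $L_p$-norm of the difference is $\left(a^p + (\delta-a)^p\right)^{1/p}$, and applying Lemma~\ref{th:ineq} to the two nonnegative numbers $a$ and $\delta-a$ (whose sum is $\delta$) yields $a^p + (\delta-a)^p \leq \left(a+(\delta-a)\right)^p = \delta^p$. Taking $p$-th roots gives $\|{\bf h}(x_1)-{\bf h}(x_2)\|_p \leq \delta = |x_1-x_2|$, as required. I would dispatch the $L_\infty$ case separately (or as the limit $p\to\infty$), where $\|[a,a-\delta]\|_\infty = \max(a,\delta-a) \leq \delta$ holds trivially. I do not anticipate a genuine obstacle here; the only point requiring care is the sign bookkeeping that reduces the two output coordinates to a pair of nonnegative numbers summing to $\delta$, after which Lemma~\ref{th:ineq} does all the work.
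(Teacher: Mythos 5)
Your proof is correct and follows essentially the same route as the paper's: both reduce the claim to Lemma~\ref{th:ineq} applied to $a=f(x_1)-f(x_2)$ and $b=(x_1-x_2)-(f(x_1)-f(x_2))$, using monotonicity and nonexpansiveness of $f$ to establish that both are nonnegative and sum to $x_1-x_2$. Your explicit handling of the $L_\infty$ case is a small addition the paper omits, but otherwise the arguments coincide.
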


\begin{proof}
For any $x_1>x_2$, by definition we have the following inequalities:
\begin{equation}
\begin{aligned}
f(x_1)-f(x_2) & \geq 0 \\
f(x_1)-f(x_2) & \leq x_1-x_2
\end{aligned}
\end{equation}
For any $p \geq 1$, invoking Lemma~\ref{th:ineq} with $a=f(x_1)-f(x_2)$ and $b=x_1-x_2-f(x_1)+f(x_2)$, we have:
\begin{equation}
\begin{aligned}
\left((f(x_1)-f(x_2)\right)^p + \left(x_1-x_2-f(x_1)+f(x_2)\right)^p & \leq \left( x_1-x_2 \right)^p \\
\left(\left((f(x_1)-f(x_2)\right)^p + \left(x_1-x_2-f(x_1)+f(x_2)\right)^p\right)^{1/p} & \leq  x_1-x_2 \\
\left(\lvert f(x_1)-f(x_2) \rvert^p + \lvert (f(x_1)-x_1)-(f(x_2)-x_2) \rvert^p\right)^{1/p} & \leq x_1-x_2 \\
\| {\bf h}(x_1) - {\bf h}(x_2) \|_p & \leq x_1-x_2
\end{aligned}
\end{equation}
\end{proof}

\begin{lemma} \label{th:pool}
Norm-pooling within each pooling window is a nonexpansive map with respect to $L_2$-norm.
\end{lemma}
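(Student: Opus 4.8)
The plan is to observe that norm-pooling restricted to a single window of size $n$ is exactly the map ${\bf x} \mapsto \|{\bf x}\|_2$ from $\mathbb{R}^n$ to $\mathbb{R}$, and that nonexpansiveness of this map with respect to the $L_2$-norm is precisely the reverse triangle inequality for the Euclidean norm. Concretely, for two input windows ${\bf x_1}$ and ${\bf x_2}$, the corresponding outputs are the scalars $\|{\bf x_1}\|_2$ and $\|{\bf x_2}\|_2$; since these are scalars, the $L_2$-distance between the outputs is simply $\lvert \|{\bf x_1}\|_2 - \|{\bf x_2}\|_2 \rvert$, and nonexpansiveness is the assertion that this quantity is bounded above by the input distance $\|{\bf x_1} - {\bf x_2}\|_2$.

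First I would apply the ordinary triangle inequality in both directions. Writing ${\bf x_1} = ({\bf x_1} - {\bf x_2}) + {\bf x_2}$ yields $\|{\bf x_1}\|_2 \leq \|{\bf x_1} - {\bf x_2}\|_2 + \|{\bf x_2}\|_2$, hence $\|{\bf x_1}\|_2 - \|{\bf x_2}\|_2 \leq \|{\bf x_1} - {\bf x_2}\|_2$. Swapping the roles of ${\bf x_1}$ and ${\bf x_2}$ and using $\|{\bf x_2} - {\bf x_1}\|_2 = \|{\bf x_1} - {\bf x_2}\|_2$ gives the matching bound $\|{\bf x_2}\|_2 - \|{\bf x_1}\|_2 \leq \|{\bf x_1} - {\bf x_2}\|_2$. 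Combining the two one-sided estimates produces $\lvert \|{\bf x_1}\|_2 - \|{\bf x_2}\|_2 \rvert \leq \|{\bf x_1} - {\bf x_2}\|_2$, which is exactly the claimed nonexpansiveness.

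I expect no genuine obstacle here, as the statement reduces to a textbook consequence of the triangle inequality. The only point worth stating carefully is the reduction itself: because each window produces a single scalar, its contribution to the output $L_2$-norm is just its absolute value, so the per-window bound above is precisely what nonexpansiveness requires. (For a full pooling layer with disjoint windows, applying this bound windowwise and summing the squares gives nonexpansiveness of the entire layer; the overlapping-window case is handled by the separate $\sqrt{K}$ rescaling described in Section~\ref{sec:pool}.)
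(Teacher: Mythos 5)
Your proof is correct and follows essentially the same route as the paper's: both establish the reverse triangle inequality $\lvert \|{\bf x_1}\|_2 - \|{\bf x_2}\|_2 \rvert \leq \|{\bf x_1} - {\bf x_2}\|_2$ by applying the ordinary triangle inequality in both directions. Your closing remark about assembling the per-window bounds into nonexpansiveness of the whole layer is a sensible addition but does not change the core argument.
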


\begin{proof}
Let ${\bf x_1}$ and ${\bf x_2}$ be two vectors with the size of a pooling window.
By triangle inequality, we have
\begin{equation}
\begin{aligned}
\| {\bf x_1} - {\bf x_2} \|_2 + \| {\bf x_1} \|_2 & \geq \| {\bf x_2} \|_2 \\
\| {\bf x_1} - {\bf x_2} \|_2 + \| {\bf x_2} \|_2 & \geq \| {\bf x_1} \|_2
\end{aligned}
\end{equation}
Therefore,
\begin{equation}
\begin{aligned}
\| {\bf x_1} - {\bf x_2} \|_2 & \geq \| {\bf x_2} \|_2 - \| {\bf x_1} \|_2 \\
\| {\bf x_1} - {\bf x_2} \|_2 & \geq \| {\bf x_1} \|_2 - \| {\bf x_2} \|_2 
\end{aligned}
\end{equation}
Therefore,
\begin{equation}
\| {\bf x_1} - {\bf x_2} \|_2 \geq \lvert \| {\bf x_1} \|_2 - \| {\bf x_2} \|_2 \rvert
\end{equation}
\end{proof}

\begin{lemma} \label{th:multi2}
Let $g\left({\bf x}\right)$ denote a multi-L2NNN classifier's confidence gap for an input data point ${\bf x}$.
The classifier will not change its answer as long as the input ${\bf x}$ is modified by no more than an $L_2$-norm of $g\left({\bf x}\right)/2$.
\end{lemma}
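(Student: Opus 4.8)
The plan is to mirror the structure of the proof of Lemma~\ref{th:sqrt2}, but to work under the weaker hypothesis that is actually available for a multi-L2NNN: here each scalar output $y_k(\mathbf{x})$ is \emph{individually} nonexpansive, whereas the concatenated logit map $\mathbf{y}(\mathbf{x})$ need not be. First I would argue by contraposition: take $\mathbf{x_1}$ classified as label $i$ with confidence gap $g(\mathbf{x_1})$, take any $\mathbf{x_2}$ at which the classifier returns a different label, and show $\|\mathbf{x_2}-\mathbf{x_1}\|_2 \geq g(\mathbf{x_1})/2$.

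The two defining inequalities I would record are, from the gap at $\mathbf{x_1}$,
\[
y_i(\mathbf{x_1}) - y_j(\mathbf{x_1}) \geq g(\mathbf{x_1}),
\]
where $j$ is whatever label wins at $\mathbf{x_2}$ (so $j \neq i$ and $y_i(\mathbf{x_1})-y_k(\mathbf{x_1})\geq g(\mathbf{x_1})$ for every $k\neq i$), and, from the fact that $i$ does not win at $\mathbf{x_2}$,
\[
y_i(\mathbf{x_2}) - y_j(\mathbf{x_2}) \leq 0.
\]
Subtracting, the difference $\big(y_i(\mathbf{x_1})-y_j(\mathbf{x_1})\big) - \big(y_i(\mathbf{x_2})-y_j(\mathbf{x_2})\big)$ is bounded below by $g(\mathbf{x_1})$.

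The key step, and the place where the bound degrades from $\sqrt{2}$ to $2$, is the upper bound on that same difference. Rewriting it as $\big(y_i(\mathbf{x_1})-y_i(\mathbf{x_2})\big) + \big(y_j(\mathbf{x_2})-y_j(\mathbf{x_1})\big)$, I would apply nonexpansiveness of the two component networks $y_i$ and $y_j$ \emph{separately}: each term is at most $\|\mathbf{x_2}-\mathbf{x_1}\|_2$. Summing gives an upper bound of $2\|\mathbf{x_2}-\mathbf{x_1}\|_2$, and combining with the lower bound $g(\mathbf{x_1})$ yields $g(\mathbf{x_1}) \leq 2\|\mathbf{x_2}-\mathbf{x_1}\|_2$, which is the claim.

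The main obstacle is conceptual rather than computational: one must resist reusing the $\sqrt{2}$-factor argument of Lemma~\ref{th:sqrt2}, which first bounded $\|\mathbf{y}(\mathbf{x_2})-\mathbf{y}(\mathbf{x_1})\|_2$ by $\|\mathbf{x_2}-\mathbf{x_1}\|_2$ and then extracted the two relevant coordinates through a Euclidean inequality that saves a factor of $\sqrt{2}$. For a multi-L2NNN the joint map is not nonexpansive, so that Euclidean saving is simply unavailable; one can only control the two coordinates one at a time, and summing two unit-Lipschitz bounds in triangle-inequality fashion is exactly what produces the factor of $2$. The only remaining subtlety is the nonstrict boundary case, which does not affect the stated guarantee.
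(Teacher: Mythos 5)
Your proposal is correct and is essentially the paper's own argument: both use the individual nonexpansiveness of $y_i$ and $y_j$ to bound $\left(y_i(\mathbf{x_1})-y_i(\mathbf{x_2})\right)+\left(y_j(\mathbf{x_2})-y_j(\mathbf{x_1})\right)$ by $2\|\mathbf{x_2}-\mathbf{x_1}\|_2$ and compare against the lower bound $g(\mathbf{x_1})$ obtained from the gap and the changed label. Your explanation of why the joint-map Euclidean argument of Lemma~\ref{th:sqrt2} is unavailable here, costing a factor of $\sqrt{2}$, matches the paper's intent.
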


\begin{proof}
Let ${\bf y}\left({\bf x}\right) = \left[ y_1\left({\bf x}\right),y_2\left({\bf x}\right),\cdots,y_K\left({\bf x}\right) \right]$ denote logit vector of a multi-L2NNN classifier for an input data point ${\bf x}$.
Let ${\bf x_1}$ and ${\bf x_2}$ be two input vectors such that the classifier outputs different labels $i$ and $j$.
By definitions, we have the following inequalities:
\begin{equation}
\begin{aligned}
y_i\left({\bf x_1}\right) - y_j\left({\bf x_1}\right) & \geq g\left({\bf x_1}\right) \\
y_i\left({\bf x_2}\right) - y_j\left({\bf x_2}\right) & \leq 0
\end{aligned}
\end{equation}
For a multi-L2NNN classifier, each logit is a nonexpansive function of the input, and it must be true that:
\begin{equation}
\begin{aligned}
\| {\bf x_2} - {\bf x_1} \|_2 & \geq \lvert y_i\left({\bf x_1}\right) - y_i\left({\bf x_2}\right) \rvert \\
\| {\bf x_2} - {\bf x_1} \|_2 & \geq \lvert y_j\left({\bf x_2}\right) - y_j\left({\bf x_1}\right) \rvert
\end{aligned}
\end{equation}
Therefore,
\begin{equation}
\begin{aligned}
\| {\bf x_2} - {\bf x_1} \|_2 & \geq \frac{ \lvert y_i\left({\bf x_1}\right) - y_i\left({\bf x_2}\right) \rvert + \lvert y_j\left({\bf x_2}\right) - y_j\left({\bf x_1}\right) \rvert }{2} \\
 & \geq \frac{ \lvert y_i\left({\bf x_1}\right) - y_i\left({\bf x_2}\right) + y_j\left({\bf x_2}\right) - y_j\left({\bf x_1}\right) \rvert }{2} \\
 & = \frac{ \lvert \left( y_i\left({\bf x_1}\right) - y_j\left({\bf x_1}\right) \right) + \left( y_j\left({\bf x_2}\right) - y_i\left({\bf x_2}\right) \right) \rvert }{2} \\
 & \geq \frac{ \lvert g\left({\bf x_1}\right) + 0 \rvert }{2} \\
 & = g\left({\bf x_1}\right)/2
\end{aligned}
\end{equation}
\end{proof}

\end{document}